 \definecolor{paleblue}{RGB}{194,207,237}
 \definecolor{english}{rgb}{0.0, 0.5, 0.0} 
\newtheorem{example}{Example}
\def\@fnsymbol#1{\ensuremath{\ifcase#1\or \dagger\or \ddagger\or
   \mathsection\or \mathparagraph\or \|\or **\or \dagger\dagger
   \or \ddagger\ddagger \else\@ctrerr\fi}}
  \providecommand\BibTeX{{%
    \normalfont B\kern-0.5em{\scshape i\kern-0.25em b}\kern-0.8em\TeX}}}
\begin{document}

\title{On Liquidity Mining for Uniswap v3}

\author{Jimmy Yin}
\affiliation{%
  \institution{izumi Finance}
}
\email{Jimmy@izumi.finance}

\author{Mac Ren}
\affiliation{%
  \institution{izumi Finance}
  }
\email{MacRen000@gmail.com}

\begin{abstract}
    The recently proposed Uniswap v3 replaces the fungible liquidity provider token (LP token) into non-fungible ones, making the design for liquidity mining more difficult. In this paper, we propose a flexible liquidity mining scheme that realizes the overall liquidity distribution through the fine control of local rewards.  From the liquidity provider's point of view, the liquidity provision strategy forms a multiplayer zero-sum game. We analyze the Nash Equilibrium and the corresponding strategy, approximately, deploying the liquidity proportional to the reward distribution, in some special cases and use it to guide the general situations. Based on the strategic response above, such a scheme allows the mining rewards provider to optimize the distribution of liquidity for the purpose such as low slippage and price stabilization.
\end{abstract}


\keywords{blockchain, decentralized finance, Uniswap v3, liquidity mining}


\maketitle

\section{Introduction}
Decentralized finance (Defi) has made considerable progress in the past two years and the decentralized exchange (DEX) is one of the most important parts, which allows users to swap between two different tokens without a centralized trusted intermediary. The leading project of DEX is undoubtedly Uniswap, which uses the {\it automated market maker }
(AMM) mechanism to determine the price, or equivalently the reserve curve, of an asset to the other in its v1 \& v2 versions\cite{uniswapv1white,adams2020uniswap}. The AMM design for Uniswap v1 \& v2 is clean and neat, where the reserves $x$ and $y$ for the token $X$ and $Y$ satisfy the {\it constant product} formula $x \cdot y = k$, as shown in Fig. \ref{fig:uni} with the blue curve. When an user tries to swap $\Delta x$ of $X$ for $Y$, he will receive $\Delta y$ derived from $(x + \Delta x)(y + \Delta y) = k$ without considering the transaction fees.   

Uniswap v1 \& v2 suffer from the problem of low efficiency capital utilization, since only a fraction of the assets in the pool are available for a given price range. Recently, Uniswap v3 was proposed and launched, on Ethereum and Optimism,  with the concept of {\it concentrated liquidity}, where liquidity providers can provide
liquidity only to a certain price intervals, e.g. $[p_a, p_b]$. In the given price range, the reserves for $x$ and $y$ is similar to the the constant product with adding ``virtual" reserves $L/\sqrt{p_b}$ and $L\sqrt{p_a}$:
$$
    (x + L/\sqrt{p_b})(y + L\sqrt{p_a}) = L^2.
$$
Geometrically, the reserve curve is shifted  to intercept the axes, as shown in Fig. \ref{fig:uni} with the red line. The intercepts can be calculated by letting $x$ and $y$ equal to zero respectively.

\begin{figure}[ht!]
    \centering
    \includegraphics[width=7cm]{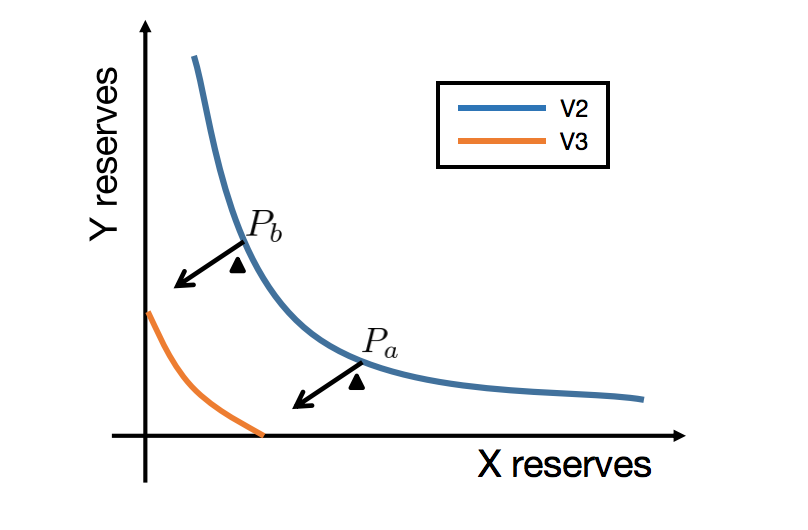}
    \caption{Illustration for the reserve curve for Uniswap v2 \& v3. When the liquidity is concentrated in price range $[p_a, p_b]$, v3 adds ``virtual" liquidity to $X$ and $Y$ to maintain a constant product $(x + x_{v})(y + y_{v}) = k$.}
    \label{fig:uni}
\end{figure}

If the constant product formula lays a solid foundation, the prosperity of the DeFi ecosystem is inseparable from the innovative mechanism of liquidity mining with the liquidity provider tokens (LP tokens). Inspired by the Compound \cite{compounddoc, compoundmining} mining design, in August 2020, the decentralized exchange Sushiswap started liquidity mining, and initially subsidized Uniswap LP tokens staked on Sushiswap\cite{sushimining}, and then launched a liquidity migration reward program for Sushiwap LP. The liquidity mining mechanism created by Compound and Sushiswap solves the problem of early liquidity provision and community expansion of the project, and has a great inspiration on the development of DeFi ecology.
After that, the token distribution along with liquidity mining becomes the standard configuration of DeFi projects. 

The current standard liquidity mining mechanism needs the fungible property of the LP tokens. However, in Uniswap v3\cite{uniswapv3white}, the LP token is non-fungible. Specifically, when a liquidity provider deploy liquidity, he will receive a non-fungible token(NFT), marking which price range and how much liquidity he provides. The introduce of the non-fungible property is essentially due to the independence of the liquidity in different price ranges, making the design for liquidity mining more difficult. In this paper, we propose a liquidity mining scheme for Uniswap v3, which realizes the overall liquidity distribution through the fine control of the rewards locally, while takes flexibility, fairness and stability in to consideration. We theoretically analyze the strategic response both from the liquidity providers side and the reward provider side.

\subsection{Outline}
Section 2 introduces parts of the Uniswap v3 protocol relating to the core liquidity provision scheme. Section 3 introduces the framework of liquidity mining design. In Section 4 we analyze the strategic response from the perspective of the liquidity providers, and then how can a reward provider makes use of it. Finally we conclude in Section 5.
\section{Liqudity Provision in Uniswap V3}
Without further explanation, we consider one pool in the Uniswap V3 protocol which consists of two kinds of tokens $X,Y$ and suppose the real reserves of $X,Y$ are $x$ and $y$ respectively. In the context of AMMs, including Uniswap V3, a protocol essentially describes a price curve which is dynamic as the reserves change. We take $X$ as the base token and $Y$ as the quote one, and the instantaneous {\it price} $p = dy/dx$ then represents how many $Y$ tokens can we get with $dx ~ X$ tokens. For example, when $X$ is ETH and $Y$ is USDC, currently the price $p$  ETH-USDC is fluctuating around $3000$ from the open market data. 

Uniswap v3 introduces the concept of concentrated liquidity, which  instead of putting liquidity on the price range of $[0, \infty)$, as in Uniswap V2, liquidity providers can provide their liquidity in a custom finite range $[p_a, p_b]$. When the price crosses the boundary of the range, one of the reserves will be depleted. 

For some practical considerations, the whole price space $[0, \infty)$ are partitioned by discrete {\it ticks} into {\it bins}, following the below definition.

\begin{definition}[ticks \& bins]
 When the pool is initialized with a price $p_0$, and a base step-size $d$, e.g. $0.0001$, the $i$-th, tick is a price value that satisfies:
 $$
    p_i = p_0 \cdot d^{i}, ~~ i \in (-\infty, \infty).
 $$
 The $i$-th bin $b_i$ is the price range  with two adjacent ticks $p_i$ and $p_{i+1}$ as boundaries:
 $$
    b_i = [p_i, p_{i+1}], ~~ i \in (-\infty, \infty).
 $$
\end{definition}

Suppose the current price $p = p_c$.
For each bin $b_i$, the liquidity providers can deploy their liquidity (tokens) into it. The liquidity is measured by $\Delta L$. Suppose $p_c \in b_j$, that is $ p_j \leq p_c < p_{j+1}$, and the tokens needed for different bins $b_i$ with the same liquidity value can be calculated by the following formula:

\begin{equation}
\label{eq:lp}
\begin{aligned}
    \Delta y &= 
        \begin{cases} 
            0 &  j < i \\ 
            \Delta L \cdot (\sqrt{p_c} - \sqrt{p_i})&   j = i \\
            \Delta L \cdot (\sqrt{p_{i+1}} - \sqrt{p_i}) &  j > i\\
        \end{cases} \\
    \Delta x &= 
        \begin{cases} 
            \Delta L \cdot (1/\sqrt{p_{i}} - 1/\sqrt{p_{i+1}})  & j < i \\ 
            \Delta L \cdot (1/\sqrt{p_{c}} - 1/\sqrt{p_{i+1}}) &  j = i \\
            0 & j > i\\
        \end{cases} \\
\end{aligned}
\end{equation}

Note that only in the bin $b_j$ containing the current price, need we deploy both $X$ and $Y$ tokens. One key property is for a fixed bin $b_i$, the tokens $\Delta X$ and $\Delta Y$ needed is linear to the liquidity $\Delta L$.

We denote the liquidity in each $b_i$ by $L_i$. The liquidity in different bins are independent in some sense. First, liquidity providers can deploy the liquidity on any bin without the influence of the others. Besides, when the current price $p_c$ in $b_j$ and the price after swapping will not cross the boundary, the procedure will only use the liquidity $L_j$ and leave the others untouched. Even when the intermediate price crosses the boarder in the swapping procedure, it just comes to another independent liquidity pool. 
\section{Liquidity Mining Scheme}
The design of the liquidity mining mechanism has effects on both the reward provider, who provide reward tokens, and liquidity providers. Given a mining rule, the game between the two players enables each to accomplish his own goals. For the reward provider, typically the manager of at least one token in the pool, the prime goal is to optimize the liquidity distribution to meet the needs such as low slippage and price stabilization. For the liquidity providers, the prime goal is to get as much rewards as they can. 

In general, the strategy of a liquidity provider is determined both by the mining reward policy and price trend forecasting. In this paper, we only consider the first factor to simply the discussion. A proper design of the liquidity mining should take the following aspects into consideration.

\begin{enumerate}
    \item {\bf Flexibility}: the choice of different parameters of the scheme will have a corresponding impact on the distribution of liquidity, enabling the reward provider to optimize the liquidity distribution.
    \item {\bf Fairness}: a rational liquidity provider should get a return that matches his own capital. 
    \item {\bf Stability}: in the case of relatively stable market conditions, neither the liquidity providers need to frequently adjust their liquidity allocation for liquidity mining rewards, nor the reward provider need to adjust the scheme to optimize the liquidity distribution.
\end{enumerate}

First, the support of reward distribution should cover multiple bins in order to meet the flexibility and stability requirements. Otherwise, one alternative design is to only give rewards to the liquidity in one certain bin. For example, the bin $b_j$ that is ``active'', i.e., the current price $p_c$ lies in that bin. In that case, for the reward provider, the incentive has no effect on the rest bins except $b_j$ and thus hard to optimize the overall liquidity distribution. For the liquidity providers, obviously the optimal provision strategy is to put all their liquidity on that bin. When the price varies a little, i.e., even crosses one bin, the providers need to re-allocate their liquidity to the new bin and thus not stable. Although a liquidity provider can use a sub-optimal strategy that chooses a price range and places the liquidity according to the range, there is only one bin can be rewarded, lowing the efficiency of capital utilization.

Combining with the liquidity provision mechanism in Uniswap V3, where for a fixed bin $b_i$, the liquidity is fungible due to the linearity. Besides, the liquidity in different bins are relatively independent in the sense that the liquidity procedure can specific a certain bin and the swapping procedure in a bin will not impact the others. We can distribute the whole rewards on some certain bins and in each bin, the rewards are distributed to the liquidity providers proportional to their contributions.

We formalize and summarize the proposed liquidity mining scheme as follows.
\begin{definition}
A feasible liquidity mining scheme is composed of the following:
\begin{enumerate}
    \item A set of bins $\mathcal{B} = \{b_i| i \in \mathcal{I}_b\}$, where liquidity providers can get rewards if and only if they provide liquidity on these bins.
    \item A total reward token amount $R$ which is distributed at each time slot.
    \item A reward distribution for the reward tokens on the chosen bins $\mathcal{B}$. Suppose the reward for the bin $b_i$ is $R_i$ at each time slot, it should satisfy:
    $$
        \sum_{i\in \mathcal{I}_b} R_i = R.
    $$
    \item Suppose the liquidity $L_i$ in bin $b_i$ is provided by $N$ liquidity providers with the $k$-th one $L_{ik}$:
    $$
        \sum_{k=1}^{N} L_{ik} = L_i, ~ \forall i \in \mathcal{I}_b.
    $$
    The reward to the $k$-th liquidity provider in $b_i$ at each time slot is $V_{ik} = R_i \cdot \frac{L_{ik}}{L_i}$. Consequently, the overall reward received by the $k$-th liquidity provider at each time slot is 
    $$
        V_k = \sum_{i\in \mathcal{I}_b} R_i \cdot \frac{L_{ik}}{L_i}, ~~\forall k \in {1,...,N}
    $$
\end{enumerate}
\end{definition}

\section{Analysis}
As mentioned at the beginning of Section 3, a liquidity mining scheme has effects on both the reward providers and liquidity providers. A comprehensive analysis framework should cover both sides. Below we first analyze the strategy in the Nash equilibrium the liquidity providers should apply based on the scheme proposed in Section 3. Next we analyze how the strategy used by the liquidity providers helps the reward providers optimize the liquidity distribution. 

\subsection{The Liquidity Providers' Perspective}
We continue to use the settings and symbols from the previous sections. For the $k$-th liquidity provider, suppose he has an amount $x_k$ of the $X$ token and $y_k$ of $Y$ respectively. Suppose each liquidity provider is purely rational and the only goal for him is to maximize the revenue of liquidity reward. We omit the fees required to provide liquidity to simplify our discussion. 

A liquidity provider need to find a distribution strategy placing his $X$ and $Y$ tokens in to the bins. With the above assumptions, obviously, the $N$ liquidity providers form a multiplayer zero-sum game.  We first show the optimal strategy when the current price $p_c$ lies at the boundary of a bin, i.e., $p_c = p_i$ for a certain $i$.

\begin{theorem}[Nash Equilibrium]
    With the assumptions above, the strategy described below for the liquidity providers forms a Nash equilibrium.
    \begin{enumerate}
        \item Based on the current price $p_c$, the bins are partitioned into two subsets where $\mathcal{B}_l = \{b_i| i \in \mathcal{I}_l\}$ are the bins on the left and $\mathcal{B}_r = \{b_i |i \in \mathcal{I}_r\}$ on the right.
        \item The $k$-th liquidity providers put the $X$ token proportional to the rewards on $\mathcal{B}_r$, and put the $Y$ token proportional to the rewards on $\mathcal{B}_l$. Specifically, for $b_i \in \mathcal{B}_r$, put $x_k \cdot \dfrac{R_i}{\sum_{j \in \mathcal{I}_r} R_j}$ in it; for $b_i \in \mathcal{B}_l$, put $y_k \cdot \dfrac{R_i}{\sum_{j \in \mathcal{I}_l} R_j}$ in it.
    \end{enumerate}
    With the strategy, the $k$-th liquidity provider will get $\sum_{j \in \mathcal{I}_r} R_j \cdot \frac{x_k}{x} + \sum_{j \in \mathcal{I}_l} R_j \cdot \frac{y_k}{y} $ reward, where $x$ and $y$ are the total amount of the tokens the liquidity providers put.
\end{theorem}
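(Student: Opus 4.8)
The plan is to exploit the special structure of the boundary price $p_c = p_i$ to decouple the two token budgets, and then to recognize each resulting single-token game as a proportional-sharing contest whose best response can be pinned down by a concave optimization.

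First I would read off the consequence of the liquidity formula \eqref{eq:lp} at the boundary $p_c = p_i$. Because the current price sits exactly on a tick, the active bin coincides with its own left endpoint and hence contributes $\Delta y = 0$; combining this with \eqref{eq:lp} shows that every bin in $\mathcal{B}_r$ requires only the $X$ token ($\Delta y = 0$) while every bin in $\mathcal{B}_l$ requires only the $Y$ token ($\Delta x = 0$), so no bin demands both tokens simultaneously. Within a fixed bin the required amount is linear in $\Delta L$ — say $\Delta x = c_i\,\Delta L$ on a right bin $b_i$ with $c_i = 1/\sqrt{p_i} - 1/\sqrt{p_{i+1}} > 0$ — so the liquidity share equals the token share, $L_{ik}/L_i = x_{ik}/x_i$, where $x_{ik}$ is the $X$ that provider $k$ places in $b_i$ and $x_i = \sum_m x_{im}$ (and symmetrically with $y$ on the left). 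This makes the whole game factorize into two independent games: an $X$-allocation game on $\mathcal{B}_r$ and a $Y$-allocation game on $\mathcal{B}_l$, which are structurally identical, so it suffices to analyze the $X$-game.

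Next I would fix a provider $k$, assume every other provider uses the claimed proportional strategy, and write $k$'s payoff. With the others placing $a_i := (x - x_k) R_i / \sum_{j \in \mathcal{I}_r} R_j$ of $X$ in bin $b_i$, provider $k$'s $X$-reward is $\sum_{i \in \mathcal{I}_r} R_i\, x_{ik}/(x_{ik} + a_i)$, to be maximized over $x_{ik} \ge 0$ with $\sum_i x_{ik} \le x_k$. Each summand is strictly increasing and strictly concave in $x_{ik}$ (its second derivative is $-2 R_i a_i/(x_{ik}+a_i)^3 < 0$ whenever $a_i > 0$), so the payoff is concave on a polytope and the best response is characterized by its KKT conditions, with strict monotonicity forcing the full budget to be spent. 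I would then verify that the candidate allocation $x_{ik} = x_k R_i/\sum_{j \in \mathcal{I}_r} R_j$ satisfies stationarity: substituting gives $x_{ik} + a_i = x\,R_i/\sum_{j \in \mathcal{I}_r} R_j$, so the marginal reward $R_i a_i/(x_{ik}+a_i)^2$ collapses to the bin-independent constant $(x - x_k)\bigl(\sum_{j \in \mathcal{I}_r} R_j\bigr)/x^2$, which serves as the common Lagrange multiplier. By concavity this stationary point is the global maximizer, so the proportional strategy is a best response to itself; the same argument on $\mathcal{B}_l$ finishes the equilibrium verification. Plugging the equilibrium profile back in yields $x_i = x\,R_i/\sum_{j \in \mathcal{I}_r} R_j$, hence $k$'s reward from each right bin is $R_i x_k/x$, and summing over $\mathcal{B}_r$ and $\mathcal{B}_l$ recovers the stated total $\sum_{j \in \mathcal{I}_r} R_j \frac{x_k}{x} + \sum_{j \in \mathcal{I}_l} R_j \frac{y_k}{y}$.

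The main obstacle I anticipate is not the algebra but making the concavity argument carry the weight: the first-order condition is only necessary in general, so I must establish concavity carefully (which needs $a_i > 0$, i.e. at least two active providers and $R_i > 0$) in order to upgrade the KKT point to a genuine global best response rather than a mere critical point. I would also dispose of the degenerate cases separately — $N = 1$, where any full-support allocation is optimal, and bins with $R_i = 0$, where both the candidate and the best response place nothing — so that the equilibrium claim holds cleanly across all parameter regimes.
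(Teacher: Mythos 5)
Your proposal follows essentially the same route as the paper: decouple the $X$- and $Y$-allocations at the boundary price via Eq.~\eqref{eq:lp}, aggregate the opponents into a single proportional player, and verify the candidate allocation through the Lagrangian/KKT stationarity conditions of the resulting proportional-sharing contest. Your additions --- the explicit concavity argument that upgrades the KKT point to a global best response, and the treatment of the degenerate cases $N=1$ and $R_i=0$ --- are welcome rigor that the paper leaves implicit, but they do not change the underlying argument.
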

\begin{proof}
    Denote the accumulated reward on left bins $\mathcal{B}_l$ by $R_l$, i.e.,$R_l  =\sum_{j \in \mathcal{I}_l} R_j$ and $R_r$ on the right similarly. 
    We first exam the provision is valid. Since we have assumed the current $p_c$ is on the boundary of a bin, the bin that requires the liquidity provider deploy both $X$ and $Y$ tokens does not exist. According to the liquidity provision rule Eq. \ref{eq:lp}, the liquidity provider can only deploy $X$ token in bins $\mathcal{B}_r$ and $Y$ token in bins $\mathcal{B}_l$, which means that the distributions for $X$ and $Y$ tokens are independent.  Obviously, $x_k = \sum_{i \in \mathcal{I}_r} x_k \cdot \dfrac{R_i}{\sum_{j\in \mathcal{I}_r} R_j} $ and $y_k = \sum_{i \in \mathcal{I}_l} y_k \cdot \dfrac{R_i}{\sum_{j\in \mathcal{I}_l} R_j} $. 
    
    Next we check the strategy proposed above forms a Nash equilibrium. We can discuss separately for $X$ and $Y$ because of the independence of the distributions. We discuss the $X$ case and the conclusion can be obtained similarly for $Y$. For the $k$-th liquidity provider, since the other liquidity providers use the same proposed strategy and the linearity in Eq. \ref{eq:lp}, we can treat the other liquidity providers as a $k'$-th provider. Denote $x_{k'} = \sum_{i \in {1,...,N}, i \not =k }x_k$.
    The $k'$-th liquidity provider place $x_{jk'} = x_{k'} \cdot R_j/R_r$ of token $X$ on the bin $R_j$. Again, since the liquidity $L_{jk'}$ added is linear to $x_{jk'}$, we denote $L_{jk'} = K_j \cdot x_{jk'}$, where $K_j$ is a constant for the fixed bin that can be computed from Eq. \ref{eq:lp}. Suppose now the $k$-th liquidity provider place $x_{jk}$ on the bin $R_j$, similarly, we get that the liquidity contributed by the token is $L_{jk} = K_j \cdot x_{jk}$.
    
    According to the mining scheme, which distribute rewards proportional to the liquidity contributed by liquidity providers. The $k$-th liquidity provider can get 
    $$
        R_j \cdot \dfrac{L_{jk}}{L_{jk'}+L_{jk}} = R_j \cdot \dfrac{x_{jk}}{x_{jk} + x_{k'}\cdot R_j/R_r}.
    $$
    
    Obviously, the $k$-th liquidity provide should  place all liquidity on the bins $\mathcal{B}_r$, since there are no rewards on the other bins needing $X$.
    Then the optimal strategy for the $k$-the liquidity provider forms the following constrained optimization problem.
    
    $$
    \begin{aligned}
       & \max ~~ \sum_{j \in \mathcal{I}_r} R_j \cdot \dfrac{x_{jk}}{x_{jk} +  x_{k'}\cdot R_j/R_r} \\
       & s.t. ~~~ \sum_{j \in \mathcal{I}_r} x_{jk} = x_k
     \end{aligned}
    $$
    
    The Lagrangian function for the above problem is:
    $$
    \mathcal{L}(x_{jk}, \lambda) =  \sum_{j \in \mathcal{I}_r} R_j \cdot \dfrac{x_{jk}}{x_{jk} +  x_{k'}\cdot R_j/R_r} + \lambda\cdot ( \sum_{j \in \mathcal{I}_r} x_{jk} - x_k) 
    $$
    The optimal solution is then given by the  Karush–Kuhn–Tucker(KKT) conditions:
    $$
    \left \{
    \begin{array}{ll}
         & \dfrac{\partial \mathcal{L}}{\partial x_{jk}} 
            =  R_j \cdot \dfrac{ x_{k'}\cdot R_j/R_r}{(x_{jk} + x_{k'}\cdot R_j/R_r)^2} - \lambda = 0, ~~ j \in \mathcal{I}_r \\
         & \dfrac{\partial \mathcal{L}}{\partial \lambda} =
            \sum_{j\in \mathcal{I}_r} x_{jk} - x_k= 0
    \end{array}
    \right .
    $$
    Solving the equations we get the optimal strategy:
    \begin{equation}
    \label{eq:os}
     \left \{
    \begin{array}{ll}
         & x_{jk} =  x_{k}\cdot R_j/R_r, ~~ j \in \mathcal{I}_r \\
         & \lambda = \dfrac{R_rx_k}{(x_k + x_k')^2}
    \end{array}
    \right .
    \end{equation}

    The optimal strategy Eq. \ref{eq:os} matches the proposed strategy, which means changing strategy will not improve the revenue. Thus the strategy is a Nash equilibrium.
\end{proof}

Note that the strategy in the above Nash equilibrium is not the dominant one, i.e., not optimal in all cases. A counter-example is as follows:

\begin{example}
Consider the following settings:
\begin{itemize}
    \item $\mathcal{B} = \{B_0, B_1\}$ and the current price $p_c = p_0$, i.e., the left boundary of the bin $B_0$. 
    \item The mining reward is equally distributed on $\mathcal{B}$, i.e., $R_0 = R_1 = 1/2R$
    \item There are $2$ liquidity providers and the $1$-th deploy all $X$ token into bin $B_0$.
\end{itemize}
\end{example}

We check the optimal strategy for the $2$-th liquidity provider. If he uses the Nash equilibrium strategy, he will deploy $x_2 \cdot R_0/R $ and $x_2 \cdot R_1/R$ of $X$ into $B_0$ and $B_1$ respectively. In this case, his revenue is $R_0 + R_1 \cdot \frac{x_2 \cdot R_1/R}{x_2 \cdot R_1/R + x_1}$. Obviously, the strategy is not optimal, since nobody is competing with him in bin $B_0$. He should place as small amount greater than $0$ as he can to get the whole reward $R_0$ and place the left into bin $B_1$ to take up a large proportion and the limitation revenue he will get is $R_0 + R_1 \cdot \frac{x_1}{x_1 + x_2}$. 

Although the Nash equilibrium strategy is not optimal, or even not unique, we can treat is as optimal in practice largely. The reason is that in practice the liquidity providers appear one after another, and the decisions are made in sequence. The reward provider can guide the liquidity providers to choose that strategy if he acts as the first liquidity provider and places initial liquidity using the Nash equilibrium strategy.  

In the general case, where the current price $p_c$ is not on the boundary of any bin and the bin $B_j$ contains $p_c$ is in the support of reward distribution, i.e., $R_j >0$ . Things become complicated since when adding liquidity in $B_j$, both $X$ and $Y$ tokens are needed and the requirements satisfy a certain relation, as in Eq. \ref{eq:lp}.  Thus a close-formed Nash equilibrium strategy is hard to find.  And for the $k$-th liquidity provider, the optimal strategy is given by the following optimization problem:

\begin{equation}
\label{prob:exact}
\begin{aligned}
    \max ~~  & \sum_{i \in \mathcal{I}_r} R_i \cdot \dfrac{x_{ik}}{x_{ik} +  X_{i}} + \sum_{i \in \mathcal{I}_l} R_i \cdot \dfrac{y_{ik}}{y_{ik} +  Y_{i}} + R_j \cdot \dfrac{x_jk}{x_jk + X_j}\\
    s.t. ~~~ &  x_{jk} + \sum_{i \in \mathcal{I}_r} x_{ik} = x_k \\
         & y_{jk} + \sum_{j \in \mathcal{I}_l} y_{ik} = y_k \\
         & y_{jk} = \dfrac{\sqrt{p_c} - \sqrt{p_j}}{ 1/\sqrt{p_c} - 1/\sqrt{p_{j+1}}} \cdot x_{jk}, \\
\end{aligned}
\end{equation}
where $\mathcal{I}_l$ are the indicator set that contains the index of bins left to $B_j$ and $\mathcal{I}_r$ is defined symmetrically; $X_i$ denotes the amount of $X$ token placed into bin $B_i$ and $Y_i$ the amount of $Y$ token. 

In general, the close-formed solution is also hard to find and We leave the problem of solving the Prob. \ref{prob:exact} for future work. Here we give some qualitative analyses. For the liquidity providers, their decision procedure can be divided into two stages. At the first stages, they determine how many $X$, and the corresponding $Y$ token, are placed into the bin $B_j$; and then they allocate the remaining $X$ and $Y$ tokens to $\mathcal{B}_r$ and $\mathcal{B}_l$ respectively. Then the situation falls into the boundary current price $p_c$ case, where the ``optimal" strategy for the liquidity providers is to put their liquidity proportional to the rewards distribution following the same derivation process. When the reward for the liquidity in bin $B_j$ does not account for a large proportion, i.e. $\frac{R_j}{R} \ll 1$,  the liquidity providers can safely omit the influence made by it.

Till now we analyzed the liquidity provision strategy with the current price $p_c$ is fixed. Here we also do some qualitative analyses considering the influence of the price change, i.e. the stability from the liquidity providers' view. We suppose that the price $p_c$ changes to $p_f < p_c$, and similar results for the other case can be obtained by the symmetry. Since $p_f < p_c$, now a liquidity provider passively swaps some $Y$ to $X$ token. The impact of the swapping procedure is local since it will only influence the liquidity in the bins that has intersection with $[p_f, p_c]$. Besides, since the $Y$ token is swapped out in one way, the The proportional distribution relationship is not changed, omitting the bin at the border. Thus the liquidity provider does not need to modify his liquidity provision for $Y$. As for the token $X$, we list several conditions that the liquidity provider does not need to modify his provision to maintain an approximate optimal position.
\begin{itemize}
    \item The rewards on the related bins, denote the index set as $I_{f}$, does not account for a large proportion, i.e., $\frac{\sum_{i \in I_{f} R_i}}{R} \ll 1$.
    \item The swapped $X$ token does not dominate the whole amount, making sure that moving these tokens to the other bins will not have a significant impact.

\end{itemize}
\begin{figure}
    \centering
    \includegraphics[width=7cm]{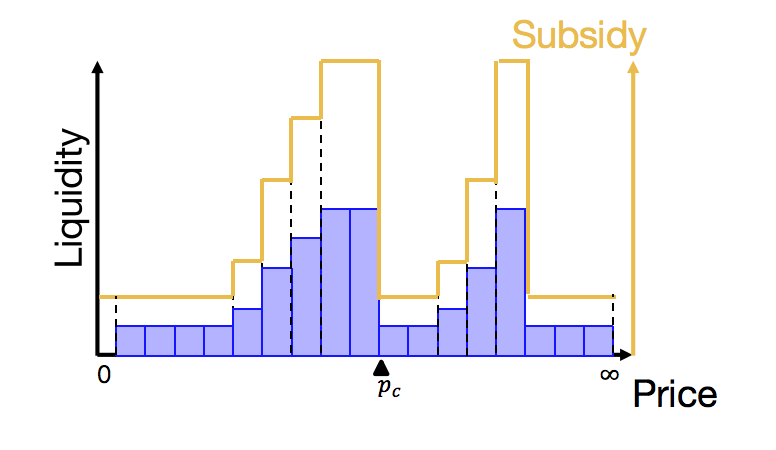}
    \caption{Arbitrary reward distribution leads to approximated expected liquidity distribution.}
    \label{fig:nash}
\end{figure}

In conclusion, the approximately optimal strategy in practice for a liquidity provider is to allocate his liquidity proportional to the reward distribution and the strategy has a certain degree of stability.

\subsection{The Reward Providers' Perspective}

\begin{figure*}[!ht]
    \centering
    \subfigure[The low slippage  distribution. ]{\includegraphics[width=5.5cm]{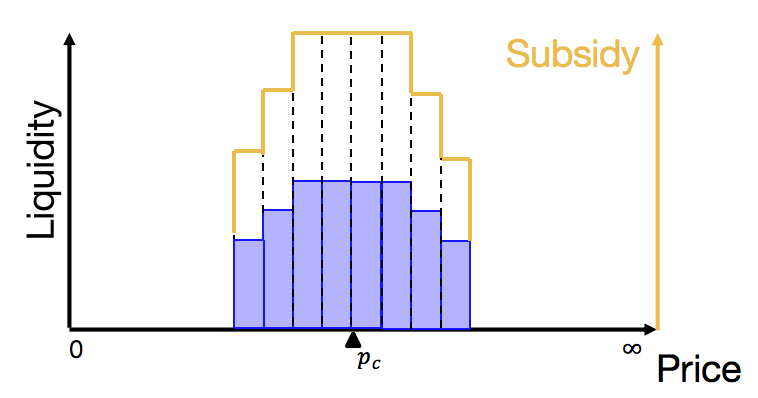}}
    \subfigure[The price stabilization  distribution. ]{\includegraphics[width=5.5cm]{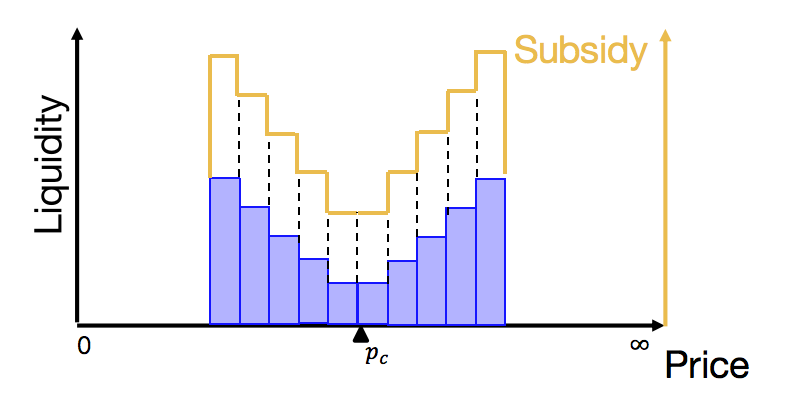}}
    \subfigure[The Uniswap V2  distribution. ]{\includegraphics[width=5.5cm]{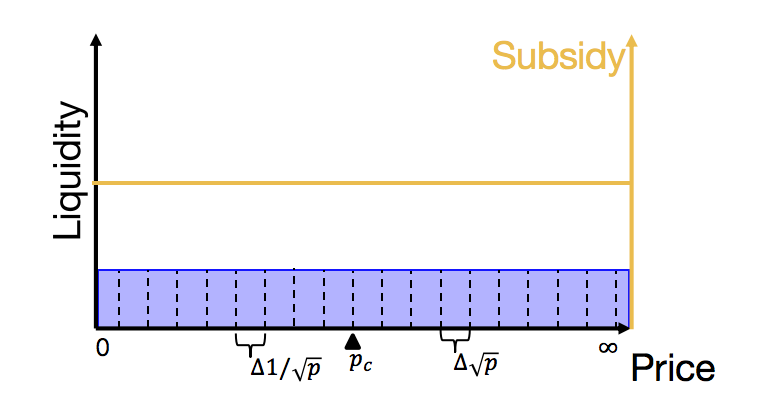}}
    \caption{Illustration of reward distribution designs with different goals and the corresponding liquidity distribution.}
    \label{fig:rds}
\end{figure*}

We discuss how the reward provider can utilize the mining scheme to optimize the liquidity distribution, with the assumption that all the liquidity providers are purely rational and will take the approximate optimal strategy in Section 4.1.

In this situation, the liquidity is distributed  the same portion as the reward on the chosen bins, except for the bin $B_j$ that the current price $p_c$ lies in. To simplify the discussion, we omit the difference between the bin $B_j$ and the others and assume the liquidity on it is ``proportional" as well, focusing on the design of the reward distribution $\{R_i | i\in\mathcal{B}_b \}$. The resulting liquidity distribution is illustrated in Fig. \ref{fig:nash}.

We list {\it three } typical designs for the reward distribution. Fig. \ref{fig:rds} illustrates the reward and the corresponding liquidity distribution. Below we discuss in details.

\begin{itemize}
\item  {\bf Low slippage case:} in this case, the reward provider aims to minimize the slippage for swapping. In most cases, the more liquidity concentrates on the bins around the current price $p_c$ lies in, the lower slippage we get. Thus we set the rewards decrease with the distance to the current price $p_c$, as shown in Fig. \ref{fig:rds}.(a). Note that the reward provider should not only give reward to the bin $B_j$ that contains the current price $p_c$, since this is unstable, i.e. when the price changes frequently in a relatively small range that covering multiple bins, the reward provider need to adjust the bin $\mathcal{B}= \{B_j\}$ frequently too.

\item {\bf Price stabilization case:} in this case, the reward provider aims to keep the price in the interval $[p_a, p_b]$ as far as possible. To a certain extent, given a fixed number of tokens, the distribution of liquidity should be opposite to the low slippage case. Since the more liquidity concentrates on the border, the more $X$ ($Y$) token needed for traders to break the price boundary $p_a$ ($p_b$). Thus we set the rewards decrease with distance to the border price $p_a$ and $p_b$, as shown in Fig. \ref{fig:rds}.(b).

\item {\bf The Uniswap V2 case:} for those reward providers who prefer the liquidity distribution in Uniswap V2, we show that the same distribution can be realized through setting the rewards properly. Remember that Uniswap V2 use the formula $x \cdot y = k$ to describe the supply and demand relationship between $X$ and $Y$, which can be regarded as the limit case when setting $d \to \infty $. Suppose the current price is $p_c$, the current liquidity for $X$ are deployed in the range $[p_c, \infty)$ and $Y$ in $(0, p_c]$. When the price changes from $p_0$ to $p_1$ and $p_c < p_0 < p_1$, a mount of $X$ tokens are swapped into $Y$ passively for the liquidity providers. We have: $$|\Delta x| = x_0 - x_1 = \sqrt{k} \cdot ( 1/ \sqrt{p_0} - 1/\sqrt{p_1}).$$
When the price changes from $p_0$ to $p_1$ and $p_1 < p_0 < p_c$, a mount of $Y$ tokens are swapped into $X$ passively for the liquidity providers. We have: $$|\Delta y| = y_0 - y_1 = \sqrt{k} \cdot (\sqrt{p_0} - \sqrt{p_1}).$$

The above derivation shows that the liquidity is proportional to the change in $\Delta 1/\sqrt{p}$ right to $p_c$ and proportional to the change in $\Delta \sqrt{p}$ on the left. Thus the corresponding reward distribution is shown in Fig. \ref{fig:rds}.(c). Note that to emphasize the relationship between price and token amount, we use the dash line to mark the price axis. 
\end{itemize}

In practice, a reward provider can choose any above-mentioned strategies according to their own needs, or it can be a mixture of them.

\section{Conclusion \& Discussion}

In this paper, we propose a liquidity mining scheme for Uniswap V3, which realizes the overall liquidity distribution through the fine control of the rewards in local bins, taking flexibility, fairness and stability in to consideration. We analyze the strategic response both for the liquidity providers side and the reward provider side. For the liquidity providers, this forms a multi-player zero-sum game. We analyze the Nash equilibrium for some cases and use it to guide the strategy in general case. Roughly speaking, a liquidity provider should deploy his liquidity proportional to the reward distribution. Based on the response from liquidity providers, we propose and analyze several realizations for the scheme which can be choose flexibly by the reward provider to optimize the liquidity distribution.

There are some important problems we briefly discuss here and leave them for future work. First, as mentioned at the beginning of Section 3, in practice the strategy of a liquidity provider is determined both by the mining reward policy and price trend forecasting, since the ``active" bins, i.e. the price often falls into, have more fee income. We need to determine how much this will influence the final liquidity distribution. 

The fine control of local rewards leads to more computation and space complexity, where a naive implementation is $O(n)$, where $n = |\mathcal{B}|$. This is not acceptable on the blockchain with high gas fee such as Ethereum. Exploring the efficient implementation of the scheme, or a combination of Layer-2 solution is of vital importance.


\bibliographystyle{ACM-Reference-Format}
\bibliography{refs}

\end{document}